\newcommand{\mc}{monotonicity constraint }
\title{Rethinking the Implementation Tricks and Monotonicity Constraint in Cooperative Multi-Agent Reinforcement Learning}
\author{%
  Jian Hu \thanks{Jian Hu and Siyang Jiang contributed equally to this work.} \thanks{Corresponding author.} \\
  Graduate Institute of Networking and Multimedia\\
  National Taiwan University\\
  Taipei\\
  \texttt{janhu9527@gmail.com} \\
   \And
  Siyang Jiang $^*$  \\
  Graduate Institute of Electrical Engineering\\
  National Taiwan University\\
  Taipei\\
  \texttt{syjiang@arbor.ee.ntu.edu.tw} \\
   \And
  Seth Austin Harding  \\
  Department of Computer Science\\
  National Taiwan University\\
  Taipei\\
  \texttt{b06902101@ntu.edu.tw} \\
    \And
  Haibin Wu  \\
  Graduate Institute of Communication Engineering\\
  National Taiwan University\\
  Taipei\\
  \texttt{f07921092@ntu.edu.tw} \\
     \And
  Shih-wei Liao  \\
  Department of Computer Science\\
  National Taiwan University\\
  Taipei\\
  \texttt{liao@csie.ntu.edu.tw} \\
}
\begin{document}
\maketitle

\begin{abstract}
Many complex multi-agent systems such as robot swarms control and autonomous vehicle coordination can be modeled as Multi-Agent Reinforcement Learning (MARL) tasks. QMIX, a widely popular MARL algorithm, has been used as a baseline for the benchmark environments, e.g., Starcraft Multi-Agent Challenge (SMAC), Difficulty-Enhanced Predator-Prey (DEPP). Recent variants of QMIX target relaxing the monotonicity constraint of QMIX, allowing for performance improvement in SMAC. In this paper, we investigate the code-level optimizations of these variants and the monotonicity constraint. (1) We find that such improvements of the variants are significantly affected by various code-level optimizations. (2) The experiment results show that QMIX with normalized optimizations outperforms other works in SMAC; (3) beyond the common wisdom from these works, the monotonicity constraint can improve sample efficiency in SMAC and DEPP. We also discuss why monotonicity constraints work well in purely cooperative tasks with a theoretical analysis. We open-source the code at \url{https://github.com/hijkzzz/pymarl2}.
\end{abstract}

\section{Introduction}
Multi-agent cooperative games have many complex real-world applications such as, robot swarm control~\cite{huttenrauch2017guided,xiao2020macro,nachum2019multi}, autonomous vehicle coordination~\cite{cao2012overview, zhou2020smarts}, and sensor networks~\cite{zhang2011coordinated}, a complex task always requires multi-agents to accomplish together. Multi-Agent Reinforcement Learning (MARL), is used to solve the multi-agent systems tasks~\cite{xiao2020macro}. 

In multi-agent systems, a typical challenge is a limited scalability and inherent constraints on agent observability and communication. Therefore, decentralized policies that act only on their local observations are necessitated and widely used~\citep{zhou2020learning}. Learning decentralized policies is an intuitive approach for training agents independently. However, simultaneous exploration by multiple agents often results in non-stationary environments, which leads to unstable learning. Therefore, \textit{Centralized Training and Decentralized Execution} (CTDE)~\citep{kraemer2016multiagent} allows for independent agents to access additional state information that is unavailable during policy inference.

Many CTDE learning algorithms have been proposed for the better sample efficiency in cooperative tasks\citep{wei2018multiagent}. Among them, several value-based approaches achieve state-of-the-art (SOTA) performance~\citep{rashid2018qmix,wang2020qplex,yang2020qatten,rashid2020weighted} on such benchmark environments, e.g., Starcraft Multi-Agent Challenge (SMAC)~\citep{samvelyan2019starcraft}, Predator-Prey (PP)~\citep{boehmer2020dcg,peng2020facmac}. To enable effective CTDE for multi-agent Q-learning, the Individual-Global-Max (IGM) principle~\citep{son2019qtran}  of equivalence of joint greedy action and individual greedy actions is critical. The primary advantage of the IGM principle is that it ensures consistency of policy with centralized training and decentralized execution. To ensure IGM principle, QMIX~\citep{rashid2018qmix} was proposed for factorizing the joint action-value function with the \textit{Monotonicity Constraint}~\cite{wang2020qplex}, however, limiting the expressive power of the mixing network.

To improve the performance of QMIX, some variants of QMIX \footnote{These algorithms are based on the mixing network from QMIX, so we call the variants of QMIX.}, including value-based approaches~\citep{yang2020qatten, rashid2020weighted, wang2020qplex, son2020qtran} and a policy-based approach~\citep{zhou2020learning}, have been proposed with the aim to relax the monotonicity constraint of QMIX.  However, while investigating the codes of these variants, we find that their performance is significantly affected by their code-level optimizations (or implementation tricks). Therefore, it is left unclear whether \mc indeed impairs the QMIX's performance.

In this paper, we investigate the impact of the \textit{code-level optimizations} and the \textit{monotonicity constraint} in cooperative MARL. Firstly, we investigate the effects of code-level optimizations, which enable QMIX to solve the most difficult challenges in SMAC. Afterward, we normalize the optimizations of QMIX and its variants; specifically, we perform the same hyperparameter search pattern for all algorithms, which includes using or removing a certain optimization and a grid hyperparameter search; the experiment results (Sec.~\ref{section:QMIX with tricks}) demonstrate that QMIX outperforms the other variants. Secondly, to study the impact of the monotonicity constraint, we propose a policy-based algorithm, RIIT; the experimental results (Sec.~\ref{phasic}) show that the monotonicity constraint improves sample efficiency in SMAC and DEPP. Lastly, to generalize cooperative tasks beyond SMAC and DEPP, we give a strict definition of purely cooperative tasks and a discussion about why monotonicity constraints work well in purely cooperative tasks.

To our best knowledge, this work is the first to analyze the monotonicity constraint and code-level optimizations in MARL. Our analysis shows that QMIX works well if a multi-agent task can be interpreted as purely cooperative, even if it can also be interpreted as competitive.
\section{Preliminaries}
\label{bg}

\textbf{Dec-POMDP.} We model the multi-robot RL problem as decentralized partially observable Markov decision process (Dec-POMDP) \citep{ong2009pomdps}, which composed of a tuple $G=\langle \mathcal{S}, \mathcal{U}, P, r, \mathcal{Z}, O, N, \gamma\rangle$. $s \in S$ describes the true state of the environment. At each time step, each agent $i \in \mathcal{N}:=\{1, \ldots, N\}$ chooses an action $u^{i} \in \mathcal{U}$, forming a joint action $\boldsymbol{u} \in  \mathcal{U}^{N}$. All state transition dynamics are defined by function $P\left(\boldsymbol{s}^{\prime} \mid \boldsymbol{s}, \boldsymbol{u}\right): \mathcal{S} \times \mathcal{U}^{N} \times \mathcal{S} \mapsto[0,1]$. Each agent has independent observation $z \in \mathcal{Z}$, determined by observation function $O(s, i): \mathcal{S} \times \mathcal{N} \mapsto \mathcal{Z}$. All agents share the same reward function $r(s, \boldsymbol{u}): \mathcal{S} \times \mathcal{U}^{N} \rightarrow \mathbb{R}$ and $\gamma \in[0,1)$ is the discount factor. The objective
function, shown in Eq.~\ref{jointobject}, is to maximize the joint value function to find a joint policy $\boldsymbol{\pi} = \langle \pi_{1},... ,\pi_{n}\rangle$.
\begin{eqnarray}\label{jointobject}
J\left(\pi\right)=\mathbb{E}_{u^{1} \sim \pi^{1}, \ldots, u^{N} \sim \pi^{N}, s \sim T}\left[\sum_{t=0}^{\infty} \gamma_{t} r_{t}\left(s_{t}, u_{t}^{1}, \ldots, u_{t}^{N}\right)\right]
\end{eqnarray}

\textbf{Centralized Training and Decentralized Execution (CTDE).}
CTDE is a popular paradigm~\citep{wang2020qplex} which  allows for the learning process to utilize additional state information~\citep{kraemer2016multiagent}. Agents are trained in a centralized way, i.e., learning algorithms, to access all local action observation histograms, global states, and sharing gradients and parameters. In the execution stage, each individual agent can only access its local action observation history $\tau^{i}$. 

\textbf{QMIX and Monotonicity Constraint.} To resolve the credit assignment problem in multi-agent learning, QMIX \citep{rashid2018qmix} learns a joint action-value function $Q_{tot}$ which can be represented in Eq.~\ref{sections:Q_total_1}:

\begin{eqnarray}
\begin{aligned}
\label{sections:Q_total_1}
Q_{tot}(s, \boldsymbol{u} ; \boldsymbol{\theta}, \phi)
= &g_{\phi}\left(s, Q_{1}\left(\tau^{1}, u^{1} ; \theta^{1}\right), \ldots, Q_{N}\left(\tau^{N}, u^{N} ;  \theta^{N}\right)\right) \\
\label{sections:Q_total_2}
&\frac{\partial Q_{tot}(s, \boldsymbol{u} ; \boldsymbol{\theta}, \phi)}{\partial Q_{i}\left(\tau^{i}, u^{i}; \theta^{i}\right)} \geq 0, \quad \forall i \in \mathcal{N}
\end{aligned}
\end{eqnarray}



where $\phi$ is the trainable parameter of the monotonic mixing network, which is a mixing network with \mc, and $\theta^i$ is the parameter of the agent network $i$. Benefiting from the monotonicity constraint in Eq. \ref{sections:Q_total_2}, maximizing joint $Q_{tot}$ is precisely the equivalent of maximizing individual $Q_i$, resulting in and allowing for optimal individual action to maintain consistency with optimal joint action. QMIX learns by sampling a multitude of transitions from the replay buffer and minimizing the mean squared temporal-difference (TD) error loss:

\begin{eqnarray}
\mathcal{L}(\theta)= \frac{1}{2} \sum_{i=1}^{b}\left[\left(y_{i}^{}-Q_{tot}(s, u ; \theta, \phi)\right)^{2}\right]
\end{eqnarray}
 
where the TD target value $y=r+\gamma \max _{u^{\prime}} Q_{tot}\left(s^{\prime}, u^{\prime} ; \theta^{-}, \phi^{-}\right)$ and $\theta^{-}$, $\phi^{-}$ are the target network parameters copied periodically from the current network and kept constant for a number of iterations. 
\begin{table}[hbtp]
 \begin{minipage}{0.48\columnwidth}
  \centering
      \begin{tabular}{|p{.5cm}|p{.5cm}|p{.5cm}|}
        \hline \ \textbf{12} & -12 & -12 \\
        \hline -12 & 0 & 0 \\
        \hline -12 & 0 & 0 \\
        \hline
    \end{tabular}
      \subcaption{Payoff matrix}
      \label{nonmatrix}
  \end{minipage}
  \begin{minipage}{0.48\columnwidth}
  \centering
         \begin{tabular}{|p{.5cm}|p{.5cm}|p{.5cm}|}
        \hline -12 & -12 & -12 \\
        \hline -12 & 0 & \textbf{0} \\
        \hline -12 & 0 & 0 \\
        \hline
        \end{tabular}
        \subcaption{QMIX: $Q_{tot}$ }
        \label{nonmatrix_qmix}
  \end{minipage}
  \caption{A non-monotonic matrix game. Bold text indicates the reward of the argmax action.}
\end{table}
However, the monotonicity constraint limits the mixing network's expressiveness, which may fail to learn in non-monotonic cases \citep{mahajan2020maven} \citep{rashid2020weighted}. Table \ref{nonmatrix} shows a non-monotonic matrix game that violates the monotonicity constraint. This game requires both robots to select the first action 0 (actions are indexed from top to bottom, left to right) in order to catch the reward 12; if only one robot selects action 0, the reward is -12. QMIX may learn an incorrect $Q_{tot}$ which has an incorrect argmax action as shown in Table~\ref{nonmatrix_qmix}. 



\section{Related Works} \label{related}
In this section, we describe the variants of QMIX which relaxing the monotonicity constraint. We explain the details of these algorithms and show the code resources in Appendix~\ref{supplementary materials}.

\textbf{Value-based Methods} To enhance the expressive power of QMIX, Qatten \cite{yang2020qatten} introduces an attention mechanism to enhance the expression of QMIX; QPLEX \cite{wang2020qplex} transfers the monotonicity constraint from Q values to Advantage values \cite{mnih2016asynchronous}; QTRAN++ \cite{son2020qtran} and WQMIX \cite{rashid2020weighted} further relax the monotonicity constraint through a true value network and some theoretical constraints; however, Value-Decomposition Networks (VDNs)~\cite{sunehag2017valuedecomposition} only requires a linear decomposition where $Q_{tot} = \sum_{i}^{N} Q_i$, which can be seen as strengthening the monotonicity constraint. 

\textbf{Policy-based Methods} LICA \cite{zhou2020learning} completely removes the monotonicity constraint through a policy mixing critic. For other MARL policy-based methods, VMIX~\cite{su2020value} combines the Advantage Actor-Critic (A2C) \cite{stooke2018accelerated} with QMIX to extend the monotonicity constraint to value networks, i.e., replacing the value network (not Q value network) with the monotonic mixing network. DOP \cite{wang2020offpolicy} learns the policy networks using the Counterfactual Multi-Agent Policy Gradients (COMA) \cite{foerster2017counterfactual} with the $Q_i$ decomposed by QMIX. At last, we briefly describe the properties of these algorithms in Table \ref{tb:properties}. 

\begin{table}[htbp] \footnotesize
\centering
\begin{tabular}{ccccc}
\hline
Algorithms & Type         & Attention & Monotonic Constraint & Off-policy \\ \hline
VDNs       & Value-based  & No        & Very Strong          & Yes        \\
QMIX       & Value-based  & No        & Strong               & Yes        \\
Qatten     & Value-based  & Yes       & Strong               & Yes        \\
QPLEX      & Value-based  & Yes       & Medium               & Yes        \\
WQMIX      & Value-based  & No        & Weak                 & Yes        \\
VMIX       & Policy-based & No        & Strong               & No         \\
LICA       & Policy-based & No        & No                   & No         \\
RIIT        & Policy-based & No        & Strong               & Yes        \\ \hline
\end{tabular}
\caption{Properties of coopertive MARL algorithms.}
\label{tb:properties}
\end{table}

All these algorithms show that their performance exceeds QMIX in SMAC, yet we find that they do not consider the impact of various code-level optimizations (Appendix \ref{appendix:basic_tricks}) in the implementations. \textbf{  Moreover, the performance of these algorithms is not even consistent in these papers.} For example, in papers \cite{wang2020qplex} and \cite{wang2020offpolicy}, QPLEX and DOP outperform QMIX, while in paper \cite{peng2020facmac}, both QPLEX and DOP underperform QMIX.

\section{Experiments Setup} \label{design}
To facilitate the study of cooperation in complex multi-robots scenarios,, we simulate the interaction among the robots through two hard computer games.

\subsection{Benchmark Environment}

\textbf{StarCraft Multi-Agent Challenge (SMAC)} is used as our main benchmark testing environment, which is a ubiquitously-used multi-agent cooperative control environment for MARL algorithms~\citep{wang2020qplex,rashid2018qmix,son2020qtran,rashid2020weighted}. SMAC consists of a set of StarCraft II micro battle scenarios, whose goals are for allied robots to defeat enemy robots, and it classifies micro scenarios into \textit{Easy}, \textit{Hard}, and \textit{Super Hard} levels. The simplest VDNs~\citep{sunehag2017valuedecomposition} can effectively solve the Easy scenarios. It is worth noting that QMIX and VDNs achieves a 0\% win rate in three Super Hard scenarios $corridor$, $3s5z\_vs\_3s5z$, and $6h\_vs\_8z$~\citep{samvelyan2019starcraft}. Therefore, we mainly investigate the Hard and Super Hard scenarios in SMAC. 

\textbf{Difficulty-Enhanced Predator-Prey (DEPP)}
In vanilla Predator-Prey \citep{lowe2020multiagent}, three cooperating agents control three robot predators to chase a faster robot prey (the prey acts randomly). The goal is to capture the prey with the fewest steps possible. We leverage two difficulty-enhanced Predator-Prey variants to test the algorithms: (1) the first variant of Predator-Prey (PP) \cite{boehmer2020dcg} requires two predators to catch the prey at the same time to get a reward; (2) In the Continuous Predator-Prey (CPP) \cite{peng2020facmac}, the prey’s policy is replaced by a hard-coded heuristic policy, i.e., at any time step, moving the prey to the sampled position with the largest distance to the closest predator, 
\subsection{Evaluation Metric}
Our primary evaluation metric is the function that maps the steps for the environment observed throughout the training to the median winning percentage (episode return for Predator-Prey) of the evaluation. Just as in QMIX~\citep{rashid2018qmix}, we repeat each experiment with several independent training runs (five independent random experiments). To accurately evaluate the convergence performance of each algorithm, \textbf{eight rollout processes for parallel sampling} are used to obtain as many samples as possible from the environments at a high rate. Specifically, our experiments can collect 10 million samples within 9 hours with a Core i7-7820X CPU and a GTX 1080 Ti GPU.

\section{Experiments}
\label{section:experiments}
Our experiments consist of two parts. The first part demonstrates the performance of several isolated tricks from the variants. The second part is the reconceptualization of the monotonicity constraint. 

\subsection{Rethinking the Code-level Optimizations} \label{experi} 
The code-level optimizations are the tricks unaccounted for in the experimental design, but that might hold significant effects on the result. To better understand their influences on performance, we perform ablation experiments on these tricks incrementally and provide some suggestions for tuning. We study the major optimizations here, and introduce the other tricks in Appendix~\ref{appendix:basic_tricks}.


\subsubsection{Optimizer}  \label{optimization}

\textbf{Study description.} QMIX and the majority of its variant algorithms use RMSProp to optimize neural networks as they prove stable in SMAC. We attempt to use Adam to optimize QMIX's neural network with quickly convergence benefiting from momentum:

\begin{figure}[h]
\centering
\includegraphics[height=2.6cm]{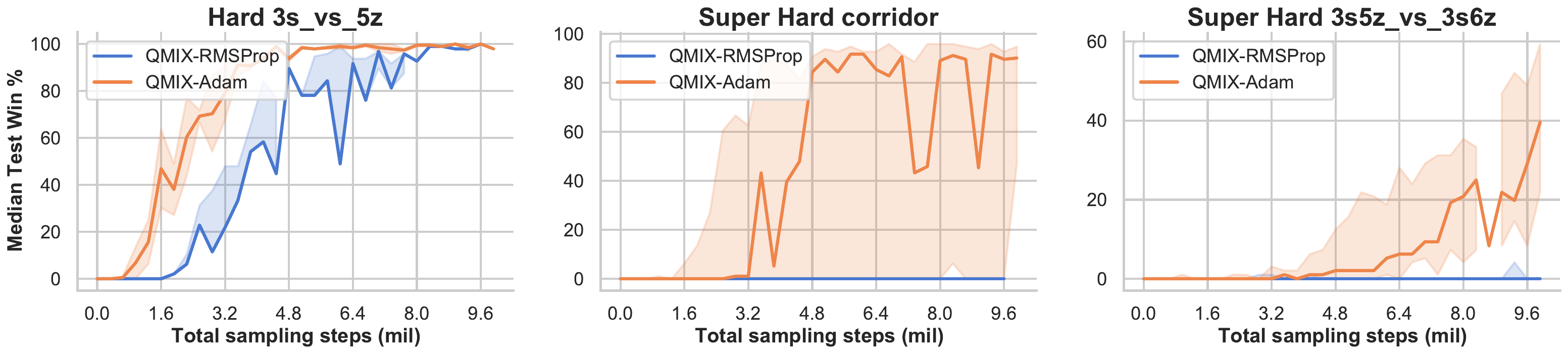}
\caption{Adam significantly improves performance when samples are updated quickly.}
\label{fig:optimization1}
\end{figure}

\textbf{Interpretation.} Figure \ref{fig:optimization1} shows that Adam \citep{kingma2014adam} increases the win rate by 100\% on the Super Hard map $corridor$. Adam boosts the network's convergence allowing for full utilization of the large quantity of samples sampled in parallel. We find that the Adam optimizer solves the problem posed by \citep{su2020value} in which QMIX does not work well under parallel training.

\textbf{Recommendation.} Use Adam with parallel training.
\subsubsection{Eligibility Traces} 

\textbf{Study description.} 
Eligibility traces such as TD$(\lambda)$~\citep{sutton2018reinforcement}, Peng's Q$(\lambda)$~\citep{peng1994incremental}, and TB$(\lambda)$~\citep{precup2000eligibility} achieve a balance between return-based algorithms (where return refers to the sum of discounted rewards $\sum_{t} \gamma^{t} r_{t}$) and bootstrap algorithms (where return refers to $r_t + V(s_{t+1})$), speeding up the convergence of reinforcement learning algorithms. Therefore, we study the application of Peng's Q$(\lambda)$ for QMIX,

\begin{figure}[h]
  \centering
  \includegraphics[height=2.6cm]{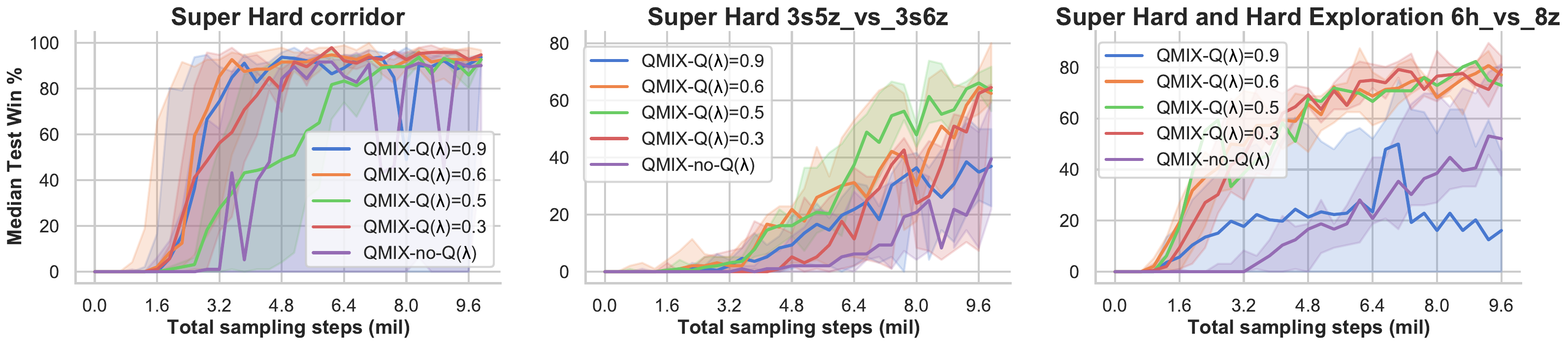}
  \caption{ Experiments for Q($\lambda$).}
\caption{Q($\lambda$) significantly improves performance of QMIX, but large values of $\lambda$ lead to instability in the algorithm.}
  \label{fig:qlambda1}
\end{figure}

\textbf{Interpretation.} Q networks without sufficient training usually have a large bias that impacts bootstrap returns. Figure~\ref{fig:qlambda1} shows that Q$(\lambda)$ allows for faster convergence in our experiments by reducing this bias. However, large values of $\lambda$ may lead to failed convergence due to the large variance. Figure \ref{fig:qlambda1} shows that when $\lambda$ is set to 0.9, it has a detrimental impact on the performance of QMIX.

\textbf{Recommendation.} Use Q($\lambda$) with a small value of $\lambda$.

\subsubsection{Replay Buffer Size} \label{replay_size}

\textbf{Study description.} In single-agent Deep Q-networks (DQN), the replay buffer size is usually set to a large value. However, in multi-agent tasks, as the action space becomes larger than that of single-agent tasks, the distribution of samples changes more quickly. In this section, we study the impact of the replay buffer size on performance.

\begin{figure}[h]
  \centering
  \includegraphics[height=2.6cm]{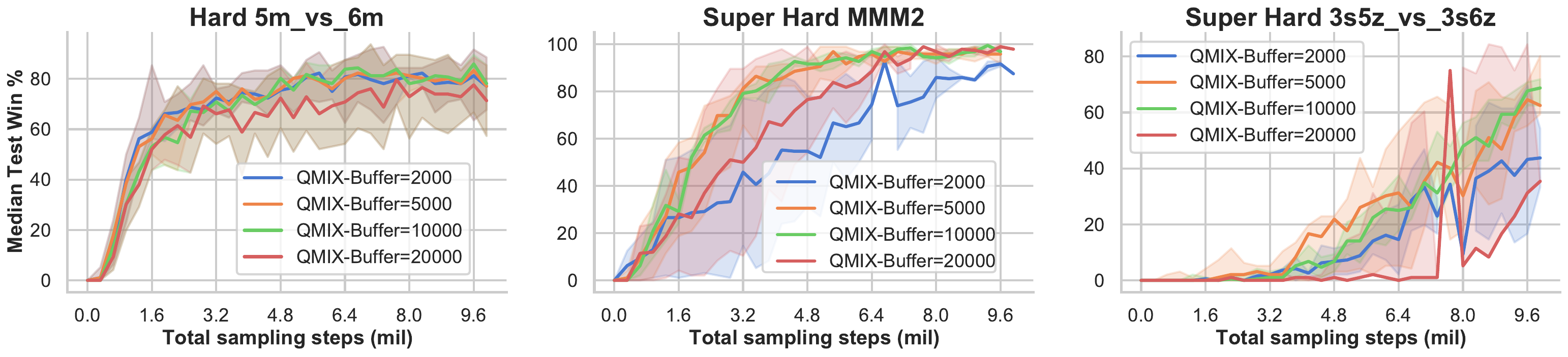}
\caption{Setting the replay buffer size to 5000 episodes allows for QMIX's learning to be more stable than by setting it to 20000 episodes.}
  \label{fig:replay_buffer}
\end{figure}

\textbf{Interpretation.} Figure~\ref{fig:replay_buffer} shows that a large replay buffer size causes instability in QMIX's learning. The causes of this phenomenon are as follows: (1) In multi-agent tasks, samples become obsolete more quickly than in single-agent tasks. (2)  Echoing in Sec.~\ref{optimization}, Adam performs better with samples with fast updates. (3) When the sampling policy is far from the current policy, the return-based methods require importance sampling ratios, which is difficult to calculate in multi-agent learning.

\textbf{Recommendation.} Use a small replay buffer size.
\subsubsection{Rollout Process Number} 

\textbf{Study description.} When we collect samples in parallel as is done in A2C \citep{stooke2018accelerated}, it shows that when there is a defined total number of samples and an unspecified number of rollout processes, the median test performance becomes inconsistent. This study aims to perform analysis and provide insight on the impact of the number of processes on the final performance.

\begin{figure}[h]
  \centering
  \includegraphics[height=2.6cm]{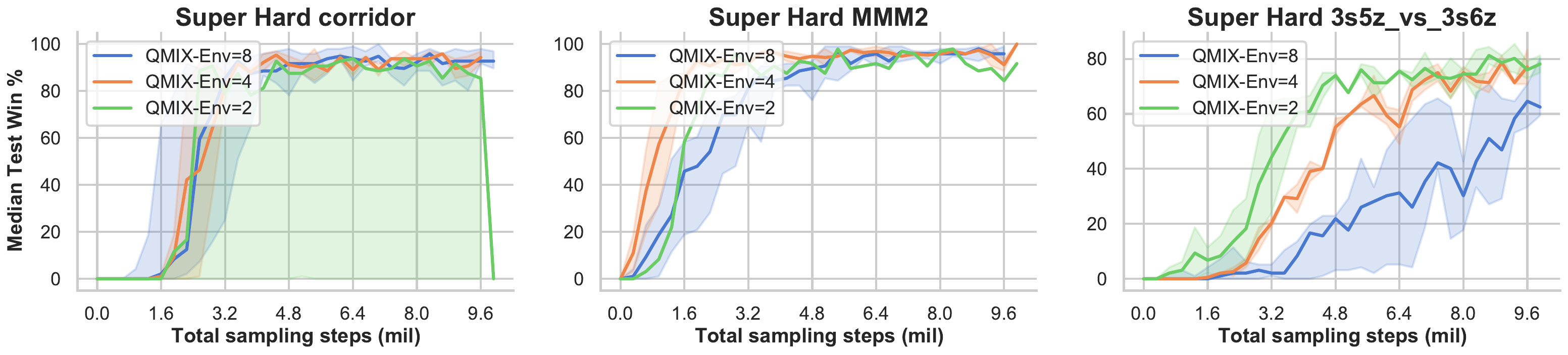}
\caption{Given the total number of samples, fewer processes achieve better performance. We set the replay buffer size to be proportional to the number of processes to ensure that the novelty of the samples is consistent.}
\label{fig:process1}
\end{figure}

\textbf{Interpretation.} Under the A2C \citep{mnih2016asynchronous} training paradigm, the total number of samples can be calculated as $S = E \cdot P \cdot I$, where
S is the total number of samples, E is the number of samples in each episode, P is the number of rollout processes, and I is the number of policy iterations. Figure \ref{fig:process1} shows that we are given both S and E; the fewer the number of rollout processes, the greater the number of policy iterations \citep{sutton2018reinforcement}; a higher number of policy iterations leads to an increase in performance. However, it also causes both longer training time and decreased stability.

\textbf{Recommendation.} Use fewer rollout processes when samples are difficult to obtain, especially for real-world robot learning; otherwise, use more rollout processes.

\subsubsection{Hidden Size} \label{hidden_size}

\textbf{Study description.}  In deep reinforcement learning, researchers typically use smaller networks to train models, but the size of the neural network can also have an impact on algorithm performance. In this study, we analyze the network width of each layer of QMIX.

\begin{figure}[h]
  \centering
  \includegraphics[height=2.6cm]{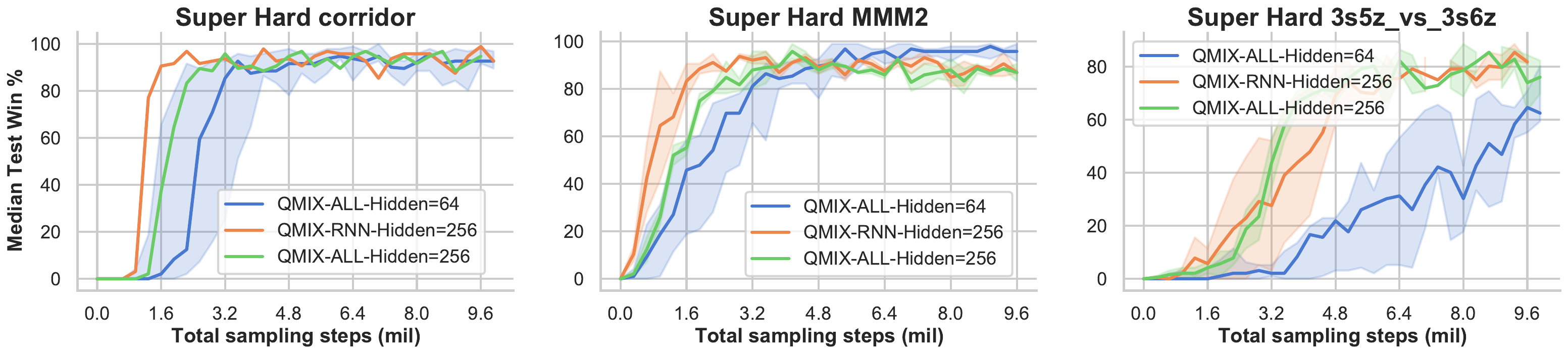}
\caption{On the hard scenario $3s5z\_vs\_3s6z$, increasing the width of neural network significantly improves the performance of QMIX.}
  \label{fig:hiddensize}
\end{figure}

\textbf{Interpretation.} As shown in Figure \ref{fig:hiddensize}, increasing the hidden size of the neural network of QMIX from 64 to 256 allows for a 18\% increase in win rate in the hard scenario $3s5z\_vs\_3s6z$. More specifically, increasing the hidden size of RNNs is more helpful to improve the performance of QMIX than increasing the hidden size of mixing networks. 

\textbf{Recommendation.} Increase the hidden size of QMIX to an appropriate value.
\subsubsection{Exploration Steps} \label{exploration_steps}

\textbf{Study description.} Some scenarios in SMAC are hard to explore, such as $6h\_vs\_8z$, so the settings of $\epsilon$-greedy become critically important. In this study, we analyze the effect of $\epsilon$ anneal period on performance.

\begin{figure}[h]
  \centering
  \includegraphics[height=2.6cm]{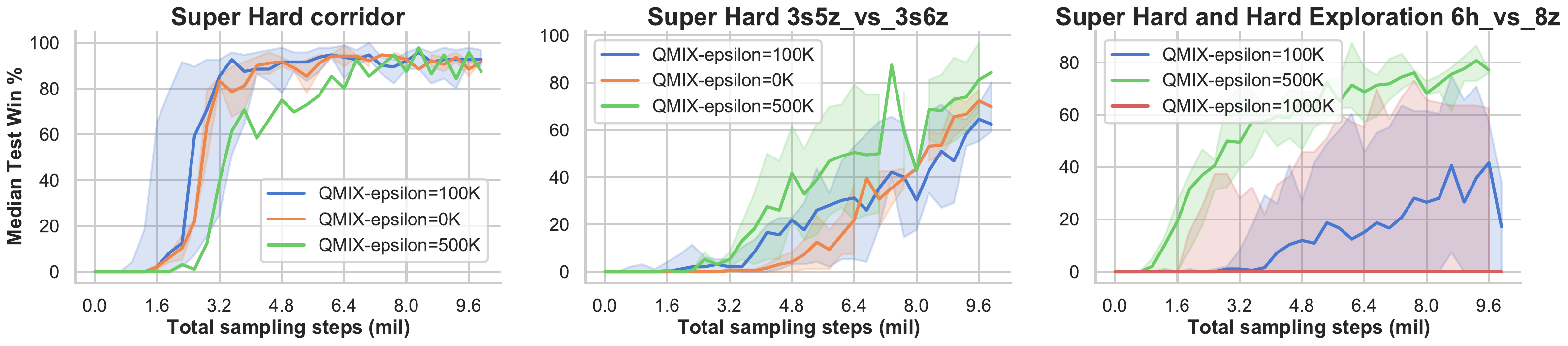}
\caption{On the hard-to-explore scenario $6h\_vs\_8z$, defining a proper length for $\epsilon$ anneal period significantly improves performance.}
  \label{fig:explore}
\end{figure}

\textbf{Interpretation.} As shown in Figure \ref{fig:explore}, increasing the length of the $\epsilon$ anneal period from 100K steps to 500K steps allows for a increase in win rate in the Super Hard scenario $6h\_vs\_8z$ (38\%) and $3s5z\_vs\_3s6z$ (23\%). However, increasing this value to 1000K instead causes the model to collapse.

\textbf{Recommendation.} Increase the value of the $\epsilon$ anneal period to an appropriate length on hard-to-explore scenarios.
\subsubsection{Overall Impacts} \label{sec:overall}
\begin{table}[htbp] \footnotesize
\centering
\begin{tabular}{lccc}
\hline
Senarios       & Difficulty       &QMIX      & Finetuned-QMIX \\ \hline
2s\_vs\_1sc     & Easy  &\textbf{100\%}                & \textbf{100\%}        \\
2s3z     & Easy   &\textbf{100\%}              & \textbf{100\%}       \\
1c3s5z    & Easy    &\textbf{100\%}              & \textbf{100\%}     \\
3s5z & Easy        &\textbf{100\%}           & \textbf{100\%}    \\
10m\_vs\_11m   & Easy    &98\%              & \textbf{100\%}     \\
8m\_vs\_9m       & Hard    &84\%        & \textbf{100\%}     \\
5m\_vs\_6m     & Hard           & 84\%       & \textbf{90\%}     \\
3s\_vs\_5z     & Hard           & 96\%      & \textbf{100\%}     \\
bane\_vs\_bane & Hard           & \textbf{100\%}       & \textbf{100\%}    \\
2c\_vs\_64zg   & Hard           & \textbf{100\%}      & \textbf{100\%}     \\
corridor       & Super Hard     & 0\%       & \textbf{100\%}     \\
MMM2           & Super Hard     & 98\%       & \textbf{100\%}    \\
3s5z\_vs\_3s6z & Super Hard     & 3\%       & \textbf{93\%} (hidden\_size = 256)   \\
27m\_vs\_30m   & Super Hard     & 56\%       & \textbf{100\%}     \\
6h\_vs\_8z     & Super Hard     & 0\%       & \textbf{93\%} ($\lambda$ = 0.3)      \\ \hline
\end{tabular}
\caption{Best median test win rate of Finetuned-QMIX and QMIX (batch size=128) in all scenarios. 
}
\label{table:all_results}
\end{table}

Then we finetuned all these hyperparameters (besides the number of rollout processes) of QMIX for each scenarios of SMAC. As shown in Table \ref{table:all_results}, 
Finetuned-QMIX attains higher win rates in all hard and super hard SMAC scenarios, far exceeding vanilla QMIX. 
\subsection{Rethinking the Monotonicity Constraint} 
\label{rethink}
In this subsection, as the past studies evaluate the performance of QMIX's variants with inconsistent implementation tricks, we retested their performance based on the normalized tricks (details in Appendix~\ref{appendix:ExperimentsDetails}). In addition, RIIT and VMIX are demonstrated to further study the effects of the monotonicity constraint. 

\subsubsection{Re-Evaluation}
\label{section:QMIX with tricks}
\begin{table}[htbp]
\centering
\resizebox{\textwidth}{24mm}{
\begin{tabular}{@{}llllccccccc@{}}
\toprule
\multirow{2}{*}{Scenarios} & \multirow{2}{*}{Difficulty} & \multicolumn{5}{c}{Value-based}                                                                  & \multicolumn{4}{c}{Policy-based}                          \\ \cmidrule(l){3-11} 
                           &                             & QMIX            & VDNs           & Qatten          & QPLEX          & \multicolumn{1}{c|}{WQMIX} & LICA           & VMIX   & DOP            & RIIT            \\ \midrule
2c\_vs\_64zg               & Hard                        & \textbf{100\%}  & \textbf{100\%} & \textbf{100\%}  & \textbf{100}\% & \textbf{100}\%             & \textbf{100}\% & 98\%   & 84\%           & \textbf{100}\% \\
8m\_vs\_9m                 & Hard                        & \textbf{100\%}  & \textbf{100\%} & \textbf{100\%}  & 95\%           & 95\%                       & 48\%           & 75\%   & 96\%           & 95\%           \\
3s\_vs\_5z                 & Hard                        & \textbf{100\%}  & \textbf{100\%} & \textbf{100} \% & \textbf{100}\% & \textbf{100}\%             & 3\%            & 96\%   & \textbf{100}\% & 96\%           \\
5m\_vs\_6m                 & Hard                        & \textbf{90\%}   & \textbf{90\%}  & \textbf{90\%}   & \textbf{90\%}  & \textbf{90\%}              & 53\%           & 9\%    & 63\%           & 67\%           \\
3s5z\_vs\_3s6z             & S-Hard                      & \textbf{75\%}   & 43\%           & 62\%            & 68\%           & 56\%                       & 0\%            & 56\%   & 0\%            & \textbf{75\%}  \\
corridor                   & S-Hard                      & \textbf{100\%}  & 98\%           & \textbf{100\%}  & 96\%           & 96\%                       & 0\%            & 0\%    & 0\%            & \textbf{100\%} \\
6h\_vs\_8z                 & S-Hard                      & 84\%            & \textbf{87\%}  & 82\%            & 78\%           & 75\%                       & 4\%            & 80\%   & 0\%            & 19\%           \\
MMM2                       & S-Hard                      & \textbf{100\%}  & 96\%           & \textbf{100\%}  & \textbf{100\%} & 96\%                       & 0\%            & 70\%   & 3\%            & \textbf{100}\% \\
27m\_vs\_30m               & S-Hard                      & \textbf{100\%}  & \textbf{100\%} & \textbf{100\%}  & \textbf{100\%} & \textbf{100}\%             & 9\%            & 93\%   & 0\%            & 93\%           \\
Discrete PP                & -                           & \textbf{40}     & 39             & -               & 39             & 39                         & 30             & 39     & 38             & 38             \\
Avg. Score                 & \textbf(Hard+)              & \textbf{94.9\%} & 91.2\%         & 92.7\%          & 92.5\%         & 90.5\%                     & 29.2\%         & 67.4\% & 44.1\%         & 84.0\%         \\ \bottomrule
\end{tabular}
}\caption{Median test winning rate (episode return) of MARL algorithms with normalized tricks. S-Hard denotes Super Hard. We compare their performance in the most difficult scenarios of SMAC and the Discrete PP.}
\label{table:baselines}
\end{table}

We then normalize the tricks for all these algorithms for the re-evaluation, i.e, we perform grid search schemes on a typical hard environment (5m\_vs\_6m) and super hard environment (3s5z\_vs\_3s6z) to find \textbf{a general set of} hyperparameters for each algorithm (details in Appendix~\ref{appendix:hyperparameters}). As shown in Table~\ref{table:baselines}, the test results on the hardest scenarios in SMAC and DEPP demonstrate that, (1) The performance of values-based methods and VMIX with normalized tricks exceeds the test results in the past literatures \cite{samvelyan2019starcraft, wang2020qplex,peng2020facmac,rashid2020weighted, su2020valuedecomposition} (details in Appendix \ref{appendix:comparison}). (2) \textbf{QMIX outpeforms all its variants}. (3) The linear VDNs is also relatively effective. 
(4) The performance of the algorithm becomes progressively worse as the monotonicity constraint decreases ($\text{QMIX} > \text{QPLEX} > \text{WQMIX} > \text{LICA}$, details in Appendix \ref{appendix:Relationships}) in the benchmark environment.

The experimental results, specifically (2), (3) and (4), show that these variants of QMIX that relax the monotonicity constraint do not obtain better performance than QMIX in some purely cooperative tasks, either SMAC or DEPP.

\subsubsection{Albation Studies of Monotonicity Constraint} 
\label{phasic}
\begin{figure}[h]
  \centering
  \includegraphics[height=2.5cm]{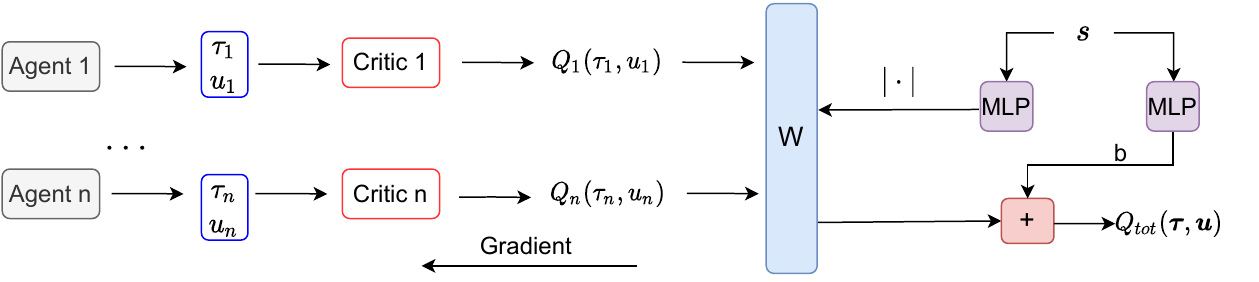}
    \caption{Architecture for RIIT: $|\cdot| $ \textbf{denotes absolute value operation}, implementing the monotonicity constraint of QMIX. $W$ denotes the non-negative mixing weights. Agent $i$ denotes the policy network which can be trained end-to-end by maximizing the $Q_{tot}$.}
  \label{fig:qmix}
\end{figure}

\begin{figure}[h]
\centering
\includegraphics[height=4.7cm]{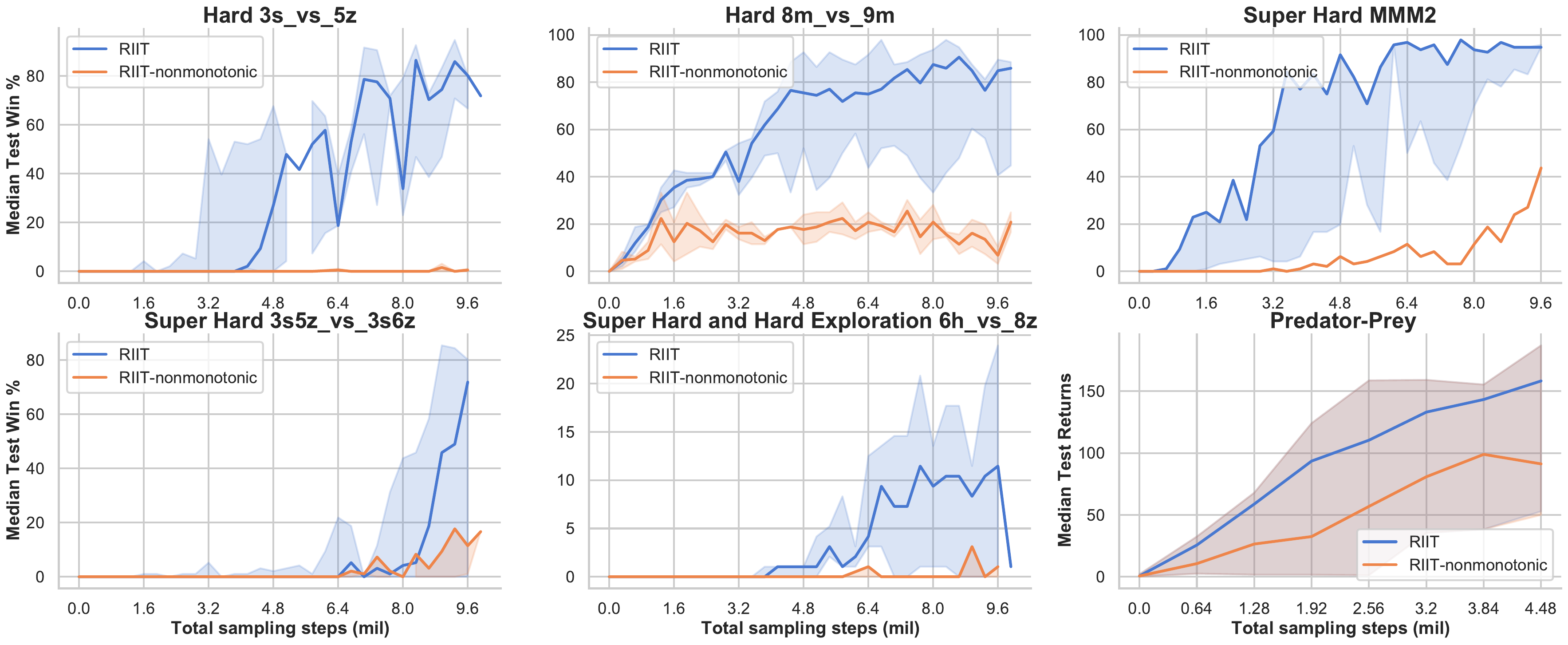}
\caption{Comparing RIIT w./ and w./o. monotonicity constraint (remove absolute value operation) on SMAC and Continuous Predator-Prey.}
\label{fig:riit_abla}
\end{figure}

We further study the impact of monotonicity constraint tasks via comparing the performance of adding or removing the constraint. An end-to-end Actor-Critic method, RIIT, is proposed. Specifically, we use the monotonic mixing network as a critic network, shown in Figure \ref{fig:qmix}. Then, in Eq.~\ref{eqn:riit}, with a trained critic $Q_{\theta_{c}}^{\pi}$ estimate, the decentralized policy networks $\pi_{\theta_{i}}^{i}$ can then be optimized end-to-end simultaneously by maximizing $Q_{\theta_{c}}^{\pi}$ with the policies $\pi_{\theta_{i}}^{i}$ as inputs. Since RIIT is trained end-to-end, it may also be used for continuous control tasks. It is worth stating that the item  $\mathbb{E}_{i}\left[\mathcal{H}\left(\pi_{\theta_{i}}^{i}\left(\cdot \mid z_{t}^{i}\right)\right)\right]]$ is the Adaptive Entropy \cite{zhou2020learning}, and we use a two-stage approach to train the actor-critic network, described in detail in Appendix \ref{riit_train}. 
\begin{eqnarray}
\begin{aligned}
\max _{\theta} \mathbb{E}_{t, s_{t}, u_{t}^{1}, \ldots, \tau_{t}^{n}}[Q_{\theta_{c}}^{\pi}\left(s_{t}, \pi_{\theta_{1}}^{1}\left(\cdot \mid \tau_{t}^{1}\right), \ldots, \pi_{\theta_{n}}^{n}\left(\cdot \mid \tau_{t}^{n}\right)\right)
+ \mathbb{E}_{i}\left[\mathcal{H}\left(\pi_{\theta_{i}}^{i}\left(\cdot \mid \tau_{t}^{i}\right)\right)\right]]
\end{aligned} 
\label{eqn:riit}
\end{eqnarray}

The monotonicity constraint on the critic (Figure \ref{fig:qmix}) is theoretically no longer required as the critic is not used for greedy action selection. We can evaluate the effects of the monotonicity constraint by removing the absolute value operation in the monotonic mixing network. In this way, RIIT can also be easily extended to non-monotonic tasks. Figure~\ref{fig:riit_abla}  demonstrates that the monotonicity constraint significantly improves the performance of RIIT. Table \ref{table:baselines} also presents that RIIT performs best among these policy-based algorithms. 

\begin{figure}[h]
\centering
\includegraphics[height=2.6cm]{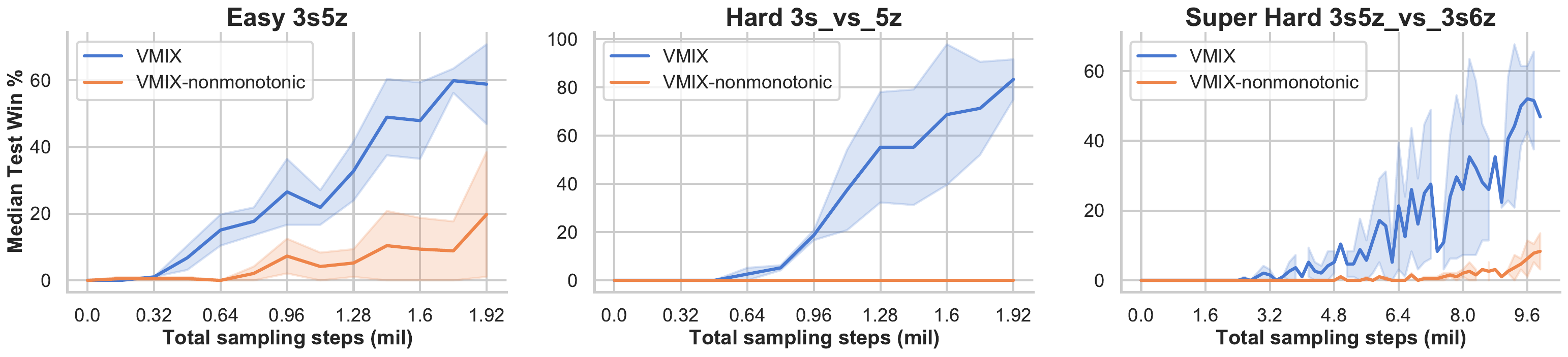}

\caption{Comparing VMIX with and without monotonicity constraint on SMAC.}
\label{fig:vmix}
\end{figure} 

To explore the generality of monotonicity constraints, we extend the above experiments to VMIX ~\cite{su2020valuedecomposition}. VMIX adds the monotonicity constraint to the value network (not Q value networks) of A2C. (details in Appendix~\ref{appendix:vmix}) VMIX learns the policy of each agent by advantage-based policy gradient \cite{mnih2016asynchronous}; therefore, the monotonicity constraint is not necessary for greedy action selection either. We can evaluate the effects of the monotonicity constraint by removing the absolute value operation in Figure \ref{fig:vmix_net}. The result from Figure~\ref{fig:vmix} shows that the monotonicity constraint improves the sample efficiency in value networks. The above experimental results indicate that the monotonicity constraint can improve the sample efficiency in some multi-robot cooperative tasks, such as SMAC and DEPP.

\section{Discussion} 
\label{theory}
To better understand the monotonicity constraint, we discuss the following two questions with theoretical analysis. \textbf{Ques.1} Why can SMAC be represented well by monotonic mixing networks? \textbf{Ques.2} Why can the monotonicity constraint improve the sample efficiency in SMAC?
To coherently answer the above questions, we give the following definitions and propositions. It is worth noting that the core assumption is that the joint action-value function $Q_{tot}$ can be represented by a non-linear mapping $f_\phi(s; Q_1, Q_2, ... Q_N)$, \textit{but without the monotonicity constraint}.

\newtheorem{definition}{Definition}
\begin{definition} \label{def:coo}
\textbf{Cooperative tasks.} For a task with N agents ($N > 1$), all agents have a common goal.
\end{definition}
\begin{definition} \label{def:hu}
\textbf{Semi-cooperative Tasks.}
 Given a cooperative task with a set of agents $\mathbb{N}$. For all states $s$ of the task, if there is a subset $\mathbb{K} \subseteq \mathbb{N}$, $\mathbb{K} \neq \varnothing$, where the $Q_i, i \in \mathbb{K}$ increases while the other $Q_j, j \notin \mathbb{K}$ are fixed, this will lead to an increase in $Q_{tot}$. 
\end{definition}

As a counterexample, the collective action problem (social dilemma) is not Semi-cooperative task. i.e., since the Q value may not include future rewards when $\gamma <$  1, the collective interest in the present may be detrimental to the future interest.

\begin{definition} \label{def:competitive}
\textbf{Competitive Cases.} Given two agents $i$ and $j$, we say that agents $i$ and $j$ are competitive if either an increase in $Q_i$ leads to a decrease in $Q_j$ or an increase in $Q_j$ leads to a decrease in $Q_i$.
\end{definition}
\begin{definition} \label{purely}
\textbf{Purely Cooperative Tasks.} Semi-cooperative tasks without competitive cases.
\end{definition}

As an counterexample, the matrix game as in Table \ref{nonmatrix} is not a purely cooperative task. Because of the random action sampling in reinforcement learning, we cannot guarantee that the agents share the same preferences. If one agent prefers action 0 (Like hunting) and the other agent prefers action 1 or 2 (Like sleeping or entertaining), they will have a conflict of interest (Those who like to sleep will cause the hunter to fail to catch the prey).
\newtheorem{theorem}{Proposition}
\begin{theorem} \label{proposition}
 Purely Cooperative Tasks can be represented by monotonic mixing networks.
\end{theorem}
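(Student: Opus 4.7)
The plan is to derive the monotonicity condition $\partial Q_{tot}/\partial Q_i \ge 0$ for every agent $i$ directly from Definitions~\ref{def:hu}--\ref{purely}, working under the stated assumption that $Q_{tot}$ already admits some (possibly non-monotone) representation $f_\phi(s; Q_1,\dots,Q_N)$. Once these non-negativity conditions are in hand, $f_\phi$ is a coordinate-wise non-decreasing function of $(Q_1,\dots,Q_N)$, which is exactly the constraint in Eq.~\ref{sections:Q_total_2}, and is realizable by the absolute-value hypernetwork construction used by QMIX.

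Fix a state $s$ and an arbitrary agent $k \in \mathbb{N}$. First I would invoke Definition~\ref{def:hu} to obtain a non-empty subset $\mathbb{K} \subseteq \mathbb{N}$ whose joint upward perturbation of $\{Q_i : i\in \mathbb{K}\}$ strictly raises $Q_{tot}$. The case $k \in \mathbb{K}$ I would handle first: by Definition~\ref{def:competitive}, no perturbation of a single $Q_k$ forces a change in any other $Q_j$, so the joint perturbation of $\mathbb{K}$ decomposes coordinate-wise into independent increments, and the contribution of each coordinate to $Q_{tot}$ must be non-negative (otherwise the other coordinates would have to do ``extra work'' to offset $k$, which again re-introduces a competitive coupling). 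For the case $k \notin \mathbb{K}$, I would argue by contradiction: if $\partial f_\phi/\partial Q_k < 0$, raising $Q_k$ alone would strictly decrease $Q_{tot}$ while every $Q_j$, $j \neq k$, is held fixed; but then along any task trajectory in which agents in $\mathbb{K}$ cooperate to raise their $Q_i$'s, agent $k$'s $Q_k$ would have to move opposite to them to be consistent with the decrease of $Q_{tot}$ it would otherwise trigger, giving a competitive pair $(k,i)$ for some $i\in\mathbb{K}$ and violating Definition~\ref{purely}. Iterating this growth of $\mathbb{K}$ over all agents yields the desired sign condition at every state.

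The closing step is to note that any $f_\phi$ satisfying $\partial f_\phi/\partial Q_i \ge 0$ for all $i$ can be written in the form used by the monotonic mixing network --- a hypernetwork producing mixing weights passed through an absolute-value (or any non-negativity-enforcing) activation --- which supplies the ``representable by a monotonic mixing network'' conclusion of Proposition~\ref{proposition}.

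The hardest part, I expect, will be bridging the two levels of description in Step~2. The partial derivative $\partial f_\phi/\partial Q_k$ is defined treating $(Q_1,\dots,Q_N)$ as independent formal inputs of a network, whereas Definitions~\ref{def:hu} and~\ref{def:competitive} describe changes in $Q_i$ induced by changes in the joint policy of the task, so the coordinates are not a priori independently perturbable. The no-competitive-cases hypothesis is precisely what one needs to lift ``no cross-coupling between task-level $Q_i$'s'' into ``independent input-level perturbations are realizable'', but writing this rigorously will likely require an auxiliary regularity assumption --- for instance, that the image of the joint-policy-to-$Q$ map contains a neighborhood of each attainable tuple $(Q_1,\dots,Q_N)$ --- so that every partial derivative of $f_\phi$ can in fact be probed by some admissible task perturbation.
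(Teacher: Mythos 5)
Your proposal is essentially the paper's own argument: the paper proves the contrapositive --- if at some state $\partial Q_{tot}(s)/\partial Q_i < 0$, then by Definition~\ref{def:hu} (applied with all other agents' $Q_j$ held fixed) an increase in $Q_i$ could only be consistent with a decrease of $Q_{tot}$ if some $Q_j$, $j \neq i$, decreases, which is a competitive case under Definition~\ref{def:competitive} and so contradicts Definition~\ref{purely} --- and it then invokes the fact that QMIX's mixing network is a universal approximator of monotonic functions, exactly your closing step. Your extra case split over the witnessing subset $\mathbb{K}$ (and the trajectory argument for $k \notin \mathbb{K}$) is not needed under the paper's per-agent use of Definition~\ref{def:hu}; otherwise the two arguments coincide, including the informal treatment of whether the $Q_i$'s can be perturbed independently, which the paper glosses over just as you anticipated.
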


\begin{proof}
Since the QMIX's mixing network is a universal function approximator of monotonic functions, for a Semi cooperative task, if there is a case (state $s$) that cannot be represented by a monotonic mixing network, i.e., $\frac{\partial Q_{tot}(s)}{\partial Q_i} < 0$, then an increase in $Q_i$ must lead to a decrease in $Q_j, j \neq i$ (since there is no $Q_j$ decrease, by Def. \ref{def:hu}, the constraint $\frac{\partial Q_{tot}(s)}{\partial Q_i} < 0$ does not hold). Therefore, by Def. \ref{def:competitive} this cooperative task has a competitive case which means it is not a purely cooperative task.
\end{proof}


\textbf{For answering Ques.1}: According to the Proposition \ref{proposition}, we need to explain why SMAC can be seen as a purely cooperative task environment. SMAC mainly uses a shaped reward signal calculated from the hit-point damage dealt, some positive reward after having enemy units killed and a positive bonus for winning the battle. In practice, we can decompose the hit-point damage dealt linearly, and divide the units killed rewards to the agents near the enemy evenly, the victory rewards to all agents. \textbf{This fair reward decomposition can be interpreted as purely cooperative.} Futuremore, as $Q_{\pi}(s, u) =\mathbb{E}_{\pi}[\sum_{k=0}^{\infty} \gamma^{k} r_{t+k+1} \mid s, u]$, the reward is linearly assignable meaning that Q value is linearly assignable, which also explains why the VDNs also work well in SMAC (Table.~\ref{table:baselines}). 

\textbf{For answering Ques.2}: Just as in RIIT's implementation (Figure \ref{fig:qmix}) where the monotonicity constraint reduces the range of values of each mixing weight by half, the hypothesis space is assumed to decrease exponentially by $(\frac{1}{2})^N$ (N denotes the number of weights). Note that the Q value decomposition mapping of the SMAC is a subset of the hypothesis space of QMIX's mixing network. Therefore, using the monotonicity constraint can allow for avoiding searching invalid parameters, leading to a significant improvement in sampling efficiency.

Our analysis shows that QMIX works well if a multi-agent task can be interpreted as purely cooperative, even if it can also be interpreted as competitive. \textbf{That is, QMIX will try to find a purely cooperative interpretation for a complex multi-agent task.}
\section{Conclusion} \label{conclu}
In this paper, we investigate the influence of certain code-level optimizations on the performance of QMIX and provide tuning optimizations suggestions. Then, we find that monotonicity constraint can improve sample efficiency in SMAC and DEPP, benefiting to the real-world robot learning. Our analysis imply that we can design reward functions in the real multi-agent task that can be interpreted as purely cooperative, improving the learning sample efficiency of the MARL. Meanwhile, we also believe that the variants that relax monotonicity constraint of QMIX might be well-suited for the mutil-agent tasks which cannot be interpreted as purely cooperative. In addition, we are hopeful that this paper will call on the community to be more fair in comparing the performance of algorithms.

\clearpage
\bibliography{iclr2022_conference}
\bibliographystyle{iclr2022_conference}

\clearpage
\appendix
\section{Rethinking the Code-level
\label{appendix:basic_tricks}
Optimizations (Extension of Sec.~\ref{experi})}
Engstrom~\textit{et.al} \cite{engstrom2020implementation} investigates code-level optimizations based on PPO \cite{schulman2017proximal} implementation, and concludes that the majority of performance differences between PPO and TRPO originate from code-level optimizations. Andrychowicz \textit{et. al}~\cite{andrychowicz2020what} investigates the influence of code-level optimizations on the performance of PPO and provides tuning optimizations. These optimizations include: (1) Adam and Learning rate annealing. (2) Orthogonal initialization and Layer scaling. (3) Observation normalization. (4) Value normalization. (5) N-step returns (eligibility traces). (6) Reward scaling. (7) Reward clipping. (8) Neural Network Size. Using a subset of the whole code-level optimizations, specifically shown in Sec.~\ref{experi}, we enabled QMIX to solve almost all scenarios of SMAC.
 
We also propose a simple trick, i.e, rewards shaping, to help QMIX learning in a non-monotonic environment. 

\subsection{Rewards Shaping}
\textbf{Study description.}
Table \ref{nonmatrix} shows a non-monotonic case that QMIX cannot solve. However, the reward function in MARL is defined by the user; we investigate whether QMIX can learn a correct argmax action by reshaping the task's reward function without changing its goal.

\begin{table}[hbtp]
 \begin{minipage}{0.48\columnwidth}
  \centering
      \begin{tabular}{|p{.7cm}|p{.7cm}|p{.7cm}|}
        \hline \ \textbf{12.0} & -0.5 & -0.5 \\
        \hline -0.5 & 0 & 0 \\
        \hline -0.5 & 0 & 0 \\
        \hline
    \end{tabular}
        \subcaption{Reshaped Payoff matrix}
  \end{minipage}
  \begin{minipage}{0.48\columnwidth}
  \centering
         \begin{tabular}{|p{.7cm}|p{.7cm}|p{.7cm}|}
        \hline \textbf{12.0} & -0.3 & -0.3 \\
        \hline -0.3 & -0.3 & -0.3 \\
        \hline -0.3 & -0.3 & -0.3 \\
        \hline
        \end{tabular}
        \subcaption{QMIX: $Q_{tot}$ }
  \end{minipage}
  \caption{A non-monotonic matrix game in which we reshape the reward by replacing the insignificant reward -$12$ (in Table \ref{nonmatrix}) with reward -$0.5$. QMIX learns a $Q_{tot}$ which has a correct argmax. Bold text indicates argmax action's reward.}
  \label{nonmatrix3}
\end{table}

\textbf{Interpretation.} 
The reward -$12$ in Table \ref{nonmatrix} does not assist the agents in finding the optimal solution; as shown in Table \ref{nonmatrix3}, this non-monotonic matrix may be solved by simply replacing the insignificant reward -$12$ with -$0.5$. The reward shaping may also help QMIX learn more effectively in other non-monotonic tasks. 

\textbf{Recommendation.} 
Increase the scale of the important rewards of the tasks and reduce the scale of rewards that may cause disruption.
\subsection{Peng's Q($\lambda$)} 
\label{qlambda}
We briefly introduce Peng's Q($\lambda$) here,
TD$(\lambda)$ can be expressed as Eq. \ref{eqn:td}:

\begin{eqnarray} \label{eqn:td}
\begin{aligned}
&G_{s}^{\lambda} \doteq(1-\lambda) \sum_{n=1}^{\infty} \lambda^{n-1} G_{s: s+n} \\
&G_{s: s+n} \doteq \sum_{t=s}^{s+n} \gamma^{t-s} r_{t}+\gamma^{n+1} V\left(s_{s+n+1}, u\right)
\end{aligned}
\end{eqnarray}

Peng's Q$(\lambda)$ replaces the V value of the next state with the max Q value, as shown in Eq. \ref{eqn:qlambda}:

\begin{eqnarray} \label{eqn:qlambda}
\begin{aligned}
&G_{s: s+n} \doteq \sum_{t=s}^{s+n} \gamma^{t-s} r_{t}+\gamma^{n+1} \max _{u} Q\left(s_{s+n+1}, u\right)
\end{aligned}
\end{eqnarray}

where $\lambda$ is the discount factor of the traces and $\left(\prod_{s=1}^{t} \lambda\right)=1 \text { when } t=0$. When $\lambda$ is set to 0, it is equivalent to 1-step bootstrap returns. When $\lambda$ is set to 1, it is equivalent to Monte Carlo \cite{sutton2018reinforcement} returns. \cite{kozuno2021revisiting} show that while Peng's Q$(\lambda)$ does not learn optimal policies under arbitrary behavior policies, a convergence guarantee can be recovered if the behavior policy tracks the target policy, as is often the case in practice.
\section{Experimental Details}
\label{appendix:ExperimentsDetails}
\subsection{Hyperparameters} 
\label{appendix:hyperparameters}
\begin{table}[htbp] \footnotesize
\centering
\begin{tabular}{lcc}
\hline
Tricks              & Value-based (VB)       & Policy-bassed  (PG)                   \\ \hline
Optimizer               & Adam, RMSProp          & Adam, RMSProp                         \\
Learning Rates          & 0.0005, 0.001          & 0.0005, 0.001, (and 0.0001 for DOP)          \\
Batch Size(episodes)    & 32, 64, 128            & 32, 64                           \\
Replay Buffer Size      & 5000, 10000, 20000     & 2000, 5000, 10000, 20000              \\
Q($\lambda$), TD($\lambda$)            & 0, 0.3, 0.6, 0.9          & 0, 0.3, 0.6, 0.9                         \\
(Adaptive) Entropy        & -                      & 0.01, 0.03, 0.06 \\
$\epsilon$ Anneal Steps & 50K, 100K, 500K, 1000K &  100K, 500K for DOP\\   \hline                                 
\end{tabular}
\caption{\centering{Hyperparameters Search on SMAC.}}
\label{table:search}
\end{table}

In this section, we present our tuning process. We get the optimal hyperparameters for each algorithm by the grid search, shown in Table \ref{table:search}. Specifically,
\begin{enumerate}
    \item For experiments in Sec. \ref{section:QMIX with tricks}, we perform grid search schemes on a typical hard environment (5m\_vs\_6m) and super hard environment (3s5z\_vs\_3s6z) to find a \textbf{general} set of hyperparameters for each algorithm. In this way, we can evaluate the robustness of these MARL algorithms. 
    \item For experiments in Sec. \ref{sec:overall}, we perform hyperparameter search on each scenarios for QMIX to demonstrate the best performance of QMIX.
\end{enumerate}

\begin{table}[htbp] \footnotesize
\begin{subtable}[]{1.0\textwidth}
\centering
\begin{tabular}{lccccc}
\hline
Algorithms              & LICA   & \textbf{OurLICA} & DOP                     & \textbf{OurDOP}         & RIIT           \\ \hline
Optimizer               & Adam   & \textbf{Adam}    & RMSProp                 & RMSProp                 & Adam          \\
Batch Size(episodes)    & 32     & 32               & Off=32, On=16           & \textbf{Off=64, On=32}  & Off=64, On=32 \\
TD($\lambda$)           & 0.8    & \textbf{0.6}     & 0.8, TB($\lambda$=0.93) & 0.8, TB($\lambda$=0.93) & 0.6           \\
Adaptive Entropy        & 0.06   & 0.06             & -                       & -                       & 0.03          \\
$\epsilon$ Anneal Steps & -      & -                & 500K (double)     & 500K (double)     & -             \\
Critic-Net Size         & 29696K & \textbf{389K}    & 122K                    & 122K                    & 69K           \\
Rollout Processes       & 32     & \textbf{8}       & 4                       & \textbf{8}              & 8             \\ \hline
\end{tabular}
\caption{\centering{Setting of Policy-based algorithms.}} {double: DOP first adds noise to the output of the policy network, then mask invalid actions and adds noise to the probabilities again.}
\label{table:policy}
\end{subtable}


\begin{subtable}[]{1.0\textwidth}
\centering
\begin{tabular}{lcccccccc}
\hline
Algorithms           & QMIX     & \textbf{OurQMIX} & Qatten  & \textbf{OurQatten} & QPLEX   & \textbf{OurQPLEX}  \\ \hline
Optimizer            & RMSProp & \textbf{Adam}     & RMSProp & \textbf{Adam}       & RMSProp & \textbf{Adam}      &      \\
Batch Size (epi.)           & 128      & \textbf{128}      & 32      & \textbf{128}        & 32      & \textbf{128}         \\
Q($\lambda$)     & 0       & \textbf{0.6}      & 0       & \textbf{0.6}        & 0       & \textbf{0.6}              \\
Attention Heads      & -       & -                 & 4       & 4                   & 10      & \textbf{4}     \\
\makecell{Mixing-Net Size} & 41K     & 41K               & 58K     & 58K                 & 476K    & \textbf{152K}               \\
$\epsilon$ Anneal Steps                                                                    & \multicolumn{6}{c}{50K $\rightarrow$ \textbf{500K for $6h\_vs\_8z$, 100 K for others}}                                                                                                                                                     \\
Rollout Processes   & 8       & \textbf{8}        & 1       & \textbf{8}          & 1       & \textbf{8}                \\ \hline
\end{tabular}
\caption{\centering{Setting of Value-based algorithm.}}
\label{table:value}
\end{subtable}
\caption{\centering{Hyperparameters Settings.}}
\end{table}

Table \ref{table:policy} and \ref{table:value} shows our general settings for the these algorithms. The network size is calculated under $6h\_vs\_8z$, where adding \textit{Our} denotes the new hyperparameter settings. Next, we describe in detail the setting of these hyperparameters,

\textbf{Neural Network Size} We first ensure the network size is the same order of magnitude, which means that we decrease the critic-net size of LICA from 29696K to 389K, and we use 4 attention heads leading the mixing-net size of QPLEX from 476K to 152K. All the agent networks are the same as those found in QMIX \cite{rashid2018qmix}.

\textbf{Optimizer \& Learning Rate}
We use Adam to optimize all networks, except VMIX (works better with RMSProp), as it may accelerate the convergence of the algorithms. Furthermore, we use different learning rates for each algorithm: (1) For all value-based algorithms, neural networks are trained with 0.001 learning rate. (2) For LICA, we set the learning rate of the agent network to 0.0025 and the critic network's learning rate to 0.0005. (3) For RIIT and VMIX, we set the learning rates to 0.001. 

\textbf{Batch Size}
We find that a large batch size helps to improve the stability of the algorithms. Therefore, for value-based algorithms, we set the batch size to 128. For the policy-based algorithms, we set the batch size to 64/32 (Offline/Online training) due to the fact that online update requires only the newest data.

\textbf{Replay Buffer Size} As discussed in Appendix. \ref{replay_size}, a small replay buffer size facilitates the convergence of the MARL algorithms. Therefore, for SMAC, the size of all replay buffers is set to 5000 episodes. For Predator-Prey, we set the buffer size to 1000 episodes.

\textbf{Exploration}
 As discussed in Appendix. \ref{exploration_steps}, we use $\epsilon$-greedy action selection, decreasing $\epsilon$ from 1 to 0.05 over n-time steps (n can be found in Table \ref{table:value}) for value-based algorithms. We use the Adaptive Entropy \cite{zhou2020learning} (Appendix. \ref{lica}) for all policy-based algorithms, except VMIX (works better with ordinary entropy and annealing noise), because it facilitates the automatic adjustment of the size of the entropy loss in different scenarios. Specialy, we add the Adaptive Entropy to DOP to prevent it from crashing in SMAC.
 
\textbf{N-step returns} 
We find that the $\lambda$ values of Q($\lambda$) and TD($\lambda$) are hevily depend on the scenario. We are using $\lambda$ = 0.6 for all tasks as the value works stably in most scenarios. However, for the on-policy method VMIX, we set $\lambda$ = 0.8. 

\textbf{Rollout Processes Number} For SMAC and Discrete PP, 8 rollout processes for parallel sampling are used to obtain as many samples as possible from the environments at a high rate. This also ensures that all the algorithms share the same number of policy iterations and sample size (10 million). For the non-monotonic matrix games, we set the processes number to 32. At last, 4 rollout processesare used for Continuous PP.

\textbf{Other Settings}
We set all discount factors $\gamma$ = 0.99. We update the target network every 200 episodes. We find that the optimal hyperparameters of the value-based algorithms are similar due to the fact that they share the same basic architecture and training paradigm. Therefore, the settings for VDNs and WQMIX are the same as for QMIX. Specifically, we use OW-QMIX, detailed in~\ref{appendix:wqmix}, in WQMIX as the baseline. 

Note that our experimental results are not directly comparable with the previous works (which use SC2.4.6), as we use StarCraft 2 (SC2.4.10) in the latest PyMARL.

\subsection{The Performance of Original Algorithms}
\label{appendix:comparison}
In this section, we compare performance of the original algorithms with third-party experimental results, i.e. experimental results of the paper citing the algorithm.

For VDNs and QMIX, the original SMAC paper ~\cite{samvelyan2019starcraft} shows that VDNs and QMIX do not perform well in hard and super hard scenarios. For Qatten, the experiments in ~\cite{wang2020qplex} demonstrates that the performance of Qatten is worse than vanilla QMIX. ~\cite{peng2020facmac} demonstrates that QPLEX and DOP does not work well in hard and super hard scenarios in SMAC, and the their performance is worse than vanilla QMIX. It is interesting that WQMIX~\cite{rashid2020weighted} shows the poor performance of WQMIX in super hard scenarios $3s5z\_vs\_3s6z$ and $corridor$. The original test results in LICA are not considered as 64 million samples are used in their experiments.

However, after our hyperparameter tuning, all the value-based methods perform well in Hard and Super Hard scenarios. This shows that our hyperparameters does improve their performance.

\section{RIIT} \label{riit_train}
In this section, we show the pseudo-code for the training procedure of RIIT. (1) Training the critic network with  offline samples and 1-step TD error loss improves the sample efficiency for critic networks; (2) Training policy networks end-to-end and critic with TD($\lambda$) and online samples improves learning stability of RIIT ~\footnote{\cite{cobbe2020phasic} shows that actor-networks generally have a lower tolerance for sample reuse than critic networks; and for RIIT, our empirical evidence shows that TD($\lambda$) is not stable in the offline samples.}.

\begin{algorithm}[ht] 
\caption{Optimization Procedure for RIIT}
\label{alg:training-procedure}
\begin{algorithmic}
\State Initialize offline replay memory $D$ and online replay memory $D'$.
\State Randomly initialize $\theta$ and $\phi$ for the policy networks and the mixing critic respectively.
\State Set $\phi^- \leftarrow \phi$.
\While{not terminated}
\State \parbox[t]{\dimexpr\textwidth-\leftmargin-\labelsep-\labelwidth}{%
\# Off-policy stage\\
Sample $b$ episodes $\tau_{1}, ..., \tau_{b}$ with $\tau_{i} = \{s_{0,i}, o_{0,i}, u_{0,i}, r_{0,i}, ..., s_{T,i}, o_{T,i}, u_{T,i}, r_{T,i}\}$ from offline replay memory $D$.
\strut}
\State \parbox[t]{\dimexpr\textwidth-\leftmargin-\labelsep-\labelwidth}{%
Update the monotonic mixing network with $y_{t,i}$ calculated by 1-step bootstrap return ($y_{t,i} = r_{t,i} + \gamma Q_{\phi^-}^{\pi}(s_{t+1}, \Vec{u}_{t+1})$): \strut}
\begin{equation} \label{eq:supp-critic-update}
    \nabla_{\phi}  \frac{1}{bT} \sum_{i=1}^{b}\sum_{t=1}^{T}\left( y_{t,i} - Q_{\phi}^{\pi}\left(s_{t,i}, u_{t,i}^1, ..., u_{t,i}^n\right)\right)^2.
\end{equation}
\State \parbox[t]{\dimexpr\textwidth-\leftmargin-\labelsep-\labelwidth}{%

\# On-policy stage\\
Sample $b$ episodes $\tau_{1}, ..., \tau_{b}$ with $\tau_{i} = \{s_{0,i}, o_{0,i}, u_{0,i}, r_{0,i}, ..., s_{T,i}, o_{T,i}, u_{T,i}, r_{T,i}\}$ from online replay memory $D'$.
\strut}
\State \parbox[t]{\dimexpr\textwidth-\leftmargin-\labelsep-\labelwidth}{%
Update the monotonic mixing network with $y_{t,i}^{TD(\lambda)}$ calculated by TD($\lambda$): \strut}
\begin{equation} \label{eq:supp-critic-update}
    \nabla_{\phi}  \frac{1}{bT} \sum_{i=1}^{b}\sum_{t=1}^{T}\left(y_{t,i}^{TD(\lambda)} - Q_{\phi}^{\pi}\left(s_{t,i}, u_{t,i}^1, ..., u_{t,i}^n\right)\right)^2.
\end{equation}
\State \parbox[t]{\dimexpr\textwidth-\leftmargin-\labelsep-\labelwidth}{%
Update the decentralized policy networks end-to-end by maximizing the Q value, with Adaptive Entropy Loss (the entropy normalized by its module length) :\strut}
\begin{equation}
    \nabla_{\theta}  \frac{1}{bT} \sum_{i=1}^{b} \sum_{t=1}^{T} \left( -Q_{\phi}^{\pi}\left(s_{t,i}, \pi^1_{\theta_1}(\cdot | z^1_{t,i}), ..., \pi^n_{\theta_n}(\cdot | z^n_{t,i})\right) - \frac{1}{n} \sum_{a=1}^{n} \mathcal{H}\left(\pi^a_{\theta_a}(\cdot|z^a_{t,i})\right) \right).
\end{equation}
\If{at target update interval}
    \State \parbox[t]{\dimexpr\textwidth-\leftmargin-\labelsep-\labelwidth}{%
    Update the target mixing network $\phi^- \leftarrow \phi$.
    \strut}
\EndIf
\EndWhile
\end{algorithmic}
\end{algorithm}
\section{CTDE algorithms} 
\label{supplementary materials}
\subsection{IQL}
Independent Q-learning (IQL) \cite{tan1993multi} breaks down a multi-agent task into a series of simultaneous single-agent tasks that share the same environment, just like multi-agent Deep Q-networks (DQN) \cite{mnih2013playing}. DQN represents the action-value function with a deep neural network parameterized by $\theta$. DQN uses a replay buffer to store transition tuple $\left\langle s, u, r, s^{\prime}\right\rangle$, where state $s^{\prime}$ is observed after taking action $u$ in state $s$ and obtaining reward $r$. However, IQL does not address the non-stationarity introduced due to the changing policies of the learning agents. Thus, unlike single-agent DQN, there is no guarantee of convergence even at the limit of infinite exploration.

\subsection{VDNs}
By contrast, Value decomposition networks (VDNs) \footnote{VDN code: \url{https://github.com/oxwhirl/pymarl}} \cite{sunehag2017valuedecomposition} seek to learn a joint action-value function $Q_{tot}(\boldsymbol{\tau}, \mathbf{u})$, where $\boldsymbol{\tau} \in \mathbf{T} \equiv \mathcal{T}^{n}$ is a joint action-
observation history and $\mathbf{u}$ is a joint action. It represents $Q_{tot}$ as the sum of individual value functions  $Q_{a}\left(\tau^{i}, u^{i} ; \theta^{i}\right)$:
$$
Q_{tot}(\boldsymbol{\tau}, \mathbf{u})=\sum_{i=1}^{n} Q_{i}\left(\tau^{i}, u^{i} ; \theta^{i}\right) .
$$

\subsection{Qatten}
\textbf{Qatten} \footnote{Qatten code: \url{https://github.com/simsimiSION/pymarl-algorithm-extension-via-starcraft}} \cite{yang2020qatten}, introduces an attention mechanism into the monotonic mixing network of QMIX:

\vspace{-0.5cm}

\begin{eqnarray}
Q_{tot} \approx c(s)+\sum_{h=1}^{H} w_{h} \sum_{i=1}^{N} \lambda_{i, h} Q^{i}
\end{eqnarray}

\vspace{-0.2cm}

\begin{eqnarray}
\lambda_{i, h} \propto \exp \left(e_{i}^{T} W_{k, h}^{T} W_{q, h} e_{s}\right)
\end{eqnarray}

where $w_{h}=\left|f^{N N}(s)\right|_{h}$, $W_{q, h}$ transforms $e_{s}$ into a global query, and $W_{k, h}$ transforms $e_{i}$ into an individual key. The $e_{s}$ and $e_{i}$ may be obtained by an embedding transformation layer for the true global state $s$ and the individual state $s_i$.

\subsection{QPLEX}
\textbf{QPLEX} \footnote{QPLEX code: \url{https://github.com/wjh720/QPLEX}} \cite{wang2020qplex} decomposes Q values into advantages and values based on Qatten, similar to Dueling-DQN \cite{wang2016dueling}:

\begin{equation}
\begin{aligned}
\text { (Joint Dueling) } Q_{tot}(\tau, u)=V_{tot}(\tau)+A_{tot}(\tau, u) \\
V_{tot}(\boldsymbol{\tau})=\max _{u^{\prime}} Q_{tot}\left(\boldsymbol{\tau}, \boldsymbol{u}^{\prime}\right)
\end{aligned}
\end{equation}

\begin{equation}
\begin{aligned}
\text { (Individual Dueling) } Q_{i}\left(\tau_{i}, u_{i}\right)=V_{i}\left(\tau_{i}\right)+A_{i}\left(\tau_{i}, u_{i}\right) \\
V_{i}\left(\tau_{i}\right)=\max _{u^{\prime}} Q_{i}\left(\tau_{i}, u_{i}^{\prime}\right)
\end{aligned}
\end{equation}

\begin{eqnarray}
\frac{\partial A_{tot}(s, \boldsymbol{u} ; \boldsymbol{\theta}, \phi)}{\partial A_{i}\left(\tau^{i}, u^{i}; \theta^{i}\right)} \geq 0, \quad \forall i \in \mathcal{N}
\label{eqn:mono2}
\end{eqnarray}

In other words, Eq. \ref{eqn:mono2} (advantage-based monotonicity) transfers the monotonicity constraint from Q values to advantage values. QPLEX thereby reduces limitations on the mixing network's expressiveness. 

\subsection{WQMIX}
\label{appendix:wqmix}
\textbf{WQMIX} \footnote{WQMIX code: \url{https://github.com/oxwhirl/wqmix}} \cite{rashid2020weighted}, just like Optimistically-Weighted QMIX (OW-QMIX), uses different weights for each sample to calculate the squared TD error of QMIX:

\begin{equation} 
\mathcal{L}(\theta) = \sum_{i=1}^{b} w(s, \mathbf{u})\left(Q_{tot}(\boldsymbol{\tau}, \mathbf{u}, s)-y_{i}\right)^{2}
\end{equation}

\begin{equation}
w(s, \mathbf{u})=\left\{\begin{array}{ll}
1 & Q_{tot}(\boldsymbol{\tau}, \mathbf{u}, s)<y_{i} \\
\alpha & \text { otherwise. }
\end{array}\right.
\end{equation}

Where $\alpha \in (0, 1]$ is a hyperparameter and $y_i$ is the true target Q value. WQMIX prefers those optimistic samples (true returns are larger than predicted), i.e., decreasing the weights of samples with non-optimistic returns. More critically, WQMIX uses an unconstrained true Q Network as a target network to guide the learning of QMIX. The authors prove that this approach can resolve the estimation errors of QMIX in the non-monotonic case.

\subsection{LICA} \label{lica}


\textbf{LICA} \footnote{LICA code: \url{https://github.com/mzho7212/LICA}} \cite{zhou2020learning} completely removes the monotonicity constraint through a policy mixing critic, as shown in Figure \ref{fig:lica}:

\begin{figure}[htbp]
  \centering
  \includegraphics[height=2.4cm]{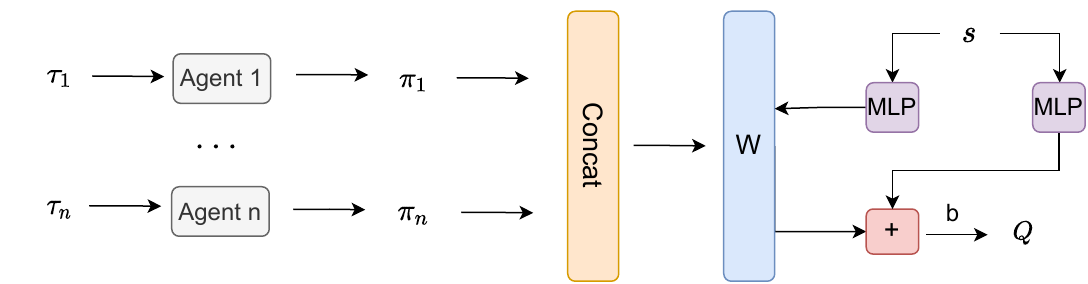}
  \caption{Architecture for LICA. LICA's mixing critic maps policy distribution to the Q value directly, in effect obviating the monotonicity constraint.}
  \label{fig:lica}
\end{figure}

LICA's mixing critic is trained using squared TD error. With a trained critic estimate, decentralized policy networks may then be optimized end-to-end simultaneously by maximizing $Q_{\theta_{c}}^{\pi}$ with the stochastic policies $\pi_{\theta_{i}}^{i}$ as inputs:

\begin{eqnarray}
\begin{aligned}
\max _{\theta} \mathbb{E}_{t, s_{t}, u_{t}^{1}, \ldots, \tau_{t}^{n}}[Q_{\theta_{c}}^{\pi}\left(s_{t}, \pi_{\theta_{1}}^{1}\left(\cdot \mid \tau_{t}^{1}\right), \ldots, \pi_{\theta_{n}}^{n}\left(\cdot \mid \tau_{t}^{n}\right)\right)
-\mathbb{E}_{i}\left[\mathcal{H}\left(\pi_{\theta_{i}}^{i}\left(\cdot \mid \tau_{t}^{i}\right)\right)\right]]
\end{aligned} 
\label{eqn:lica}
\end{eqnarray}

where the gradient of entropy item  $\mathbb{E}_{i}\left[\mathcal{H}\left(\pi_{\theta_{i}}^{i}\left(\cdot \mid z_{t}^{i}\right)\right)\right]]$ is normalized by taking the quotient of its own modulus length: Adaptive Entropy (Adapt Ent). Adaptive Entropy automatically adjusts the coefficient of entropy loss in different scenarios.

\subsection{VMIX} 
\label{appendix:vmix}
\begin{figure}[h]
  \centering
  \includegraphics[height=2.4cm]{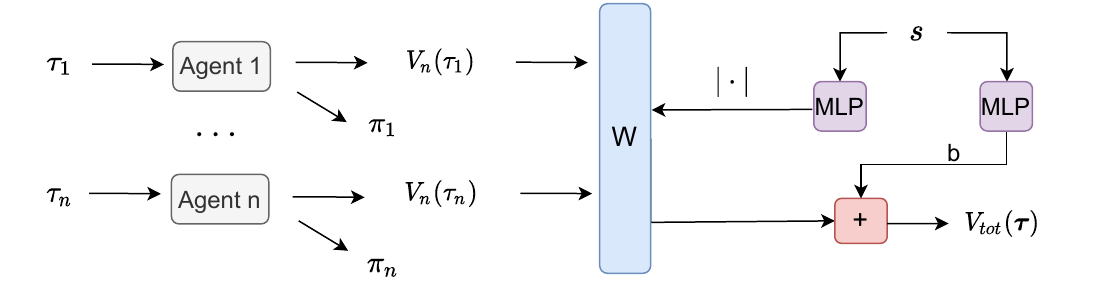}
    \caption{Architecture for VMIX: $|\cdot| $ denotes \textbf{absolute value operation}, decomposing $V_{tot}$ into $V_i$.}
  \label{fig:vmix_net}
\end{figure}
VMIX \footnote{VMIX code: \url{https://github.com/hahayonghuming/VDACs}}  \cite{su2020value} combines the Advantage Actor-Critic (A2C) \cite{stooke2018accelerated} with QMIX to extend the monotonicity constraint to value networks (not Q value network), as shown in Eq. \ref{eqn:vmix} and Figure \ref{fig:vmix_net}. We verified that the monotonicity constraint also has a positive effect on the value network based on VMIX (Figure \ref{fig:vmix}).

\begin{equation}
\begin{aligned}
&V_{tot}(s; \boldsymbol{\theta}, \phi)
\\= &g_{\phi}\left(s, V^{1}\left(\tau^{1} ; \theta^{1}\right), \ldots, V^{N}\left(\tau^{N};  \theta^{N}\right)\right)
\end{aligned}
\end{equation}

\begin{equation} \label{eqn:vmix}
\frac{\partial V_{tot}}{\partial V^{i}} \geq 0, \quad \forall i \in\{1, \ldots, N\}
\end{equation}

where $\phi$ is the parameter of value mixing network, and $\theta_i$ is the parameter of agent network. With the centralized value function $ V_{tot}$, the policy networks can be trained by policy gradient (Eq. \ref{p_gradient}),

\begin{equation}
\hat{g}_{i}=\left.\frac{1}{\left|\mathcal{D}\right|} \sum_{\tau \in \mathcal{D}} \sum_{t=0}^{T} \nabla_{\theta} \log \pi_{\theta^i}\left(u_{t}^i \mid \tau_{t}^i\right)\right|_{\theta^{i}} \hat{A}_{t}
\label{p_gradient}
\end{equation}

where $\hat{A}_{t} = r + V_{tot}(s') - V_{tot}(s)$ is the advantage function, and $\mathcal{D}$ denotes replay buffer.

\subsection{Relationship between Previous Works}
\label{appendix:Relationships}
VDNs requires a linear decomposition of Q values, so it has the strongest monotonicity constraint. Since the weights calculated by softmax (attention mechanism) are greater than or equal to zero, the constraint strengths of Qatten and QMIX are approximately equal. QPLEX just shifts the constraint to advantage values without removing it. WQMIX relaxes the monotonicity constraint even further by a true Q value network and theoretical guarantees. LICA completely removes the monotonicity constraint by new network architecture. We rank the strength of the monotonicity constraints on these MARL algorithms: 
\begin{equation}
     \text{VDNs} >  \text{QMIX} \approx  \text{Qatten} >  \text{QPLEX} >  \text{WQMIX} >  \text{LICA}
\end{equation}


\end{document}